\documentclass[10pt]{article}
\usepackage[margin=1in]{geometry}
\usepackage[T1]{fontenc}
\usepackage{mathptmx}
\usepackage[utf8]{inputenc}
\usepackage{amsmath,amssymb,amsthm}
\usepackage{booktabs}
\usepackage{graphicx}
\usepackage{tikz}
\usetikzlibrary{arrows.meta,positioning,calc,fit}
\usepackage{pgfplots}
\pgfplotsset{compat=1.16}
\usepackage{algorithm}
\usepackage{algpseudocode}
\usepackage[numbers,sort&compress]{natbib}
\usepackage{microtype}
\usepackage[colorlinks=true,linkcolor=blue!50!black,citecolor=blue!50!black,urlcolor=blue!50!black]{hyperref}
\usepackage[capitalise]{cleveref}
\usepackage{caption}
\newtheorem{proposition}{Proposition}

\title{\textbf{Decode Late, Multiply Once:\\ Rethinking the Dot Product for Quantized Weights}}
\author{Jonathan Tapson\\ BrainChip Inc, Laguna Hills, CA, USA\\ \texttt{jtapson@brainchip.com}}
\date{October 2026}

\begin{document}
\maketitle

\begin{abstract}
Large neural networks, and large language models in particular, are increasingly deployed with very low-resolution weights (1--4 bits) while activations remain at 16 or 32 bits. We show that the dot products at the heart of such workloads can be reorganized so that elementwise multiplications are almost entirely removed. For a vector of length $n$ and weights quantized to $k$ bits, a conventional dot product needs $n$ weight decodes, $n$ multiplications and $n$ additions. In the \emph{add-before-multiply} (``scatter-accumulate'') refactoring, each weight code is used directly as an address offset that selects one of $2^k$ accumulators; each activation is added into the selected accumulator, and the $2^k$ accumulated sums are decoded and multiplied only once, at the end, in a fused decode-and-multiply step. The cost becomes $n+2^k$ additions (the $2^k$ products must still be summed), $2^k$ decodes and $2^k$ multiplications, and the decoded weight vector is never formed, which also shrinks local weight memory by up to $w_d/k$ for $w_d$-bit decoded weights. The reformulation is exact for integer arithmetic, is agnostic to integer or floating-point operands, accommodates arbitrary (including nonlinear) decoding, and extends to matrix and tensor products. We derive the method, give a cost model and worked example, and describe a hardware realization: a scatter-accumulate processing element (SA-PE) that replaces the MAC unit of a weight-stationary systolic array, with a four-phase load/stream/reduce/drain schedule in which a small, time-shared multiplier is used only $2^k$ times per tile. We give a local-memory model, report an RTL implementation with FPGA-flow sizing against a matched MAC array (the unshared SA-PE is larger than a MAC PE; the saving appears when the reduction multiplier is shared), report numerical-accuracy and functional experiments on software models, and discuss limitations and related prior work on weight sharing and lookup-based kernels.
\end{abstract}

\section{Introduction}
A recent development in AI is the use of low-resolution (1--4 bit) weights in very large neural networks, such as the foundation models behind modern LLMs~\cite{ma2024era,han2016deep}. The inference throughput of these models is generally limited by the rate at which weights can be loaded from memory, so low-resolution weights allow more weights to be delivered in a given time at fixed memory bandwidth.

The resulting workload is one in which a processing unit mostly multiplies a low-resolution weight (say 1--4 bits) by a high-resolution activation or other data variable (say 16 or 32 bits). These calculations normally take the form of large matrix or tensor multiplications, at the core of which are dot products: two vectors, which may be rows or columns of a matrix or tensor, are combined to produce a scalar. When optimizing matrix computation in a CPU, GPU or NPU, the best results require the minimum movement of data, the minimum use of local memory, and the minimum number of computations. Multipliers are the most expensive of the arithmetic units involved~\cite{horowitz2014computing}, yet in a low-bit-weight setting most of their capability is wasted: a $k$-bit weight takes only $2^k$ distinct values, so a long dot product multiplies the activations by the same few numbers over and over.

This paper describes a way of computing such dot products that exploits this repetition. The method works whether the weights are integer or floating-point encoded and whether the data variables are integer or floating-point encoded. A conventional dot product between a weight vector of resolution $k$ bits and a data vector of arbitrary resolution, each of length $n$, requires $n$ multiply-accumulate (MAC) operations, i.e.\ $n$ multiplications and $n$ additions. The new method decodes late: activations are summed by code first, and decoding and multiplication are fused into one step per bin, so it requires $2^k$ decodes, $2^k$ multiplications and $n+2^k$ additions (the $2^k$ products must still be summed). Since $k$ is usually 1--4 and $n$ may exceed 1000, the number of multiplications falls by $n/2^k$, which can exceed 100$\times$; there is a gain for any computation in which $2^k\ll n$. Because the $n$ additions remain, the achievable energy gain is bounded by the cost ratio of a decode--multiply--accumulate to a plain accumulate; we quantify this in \cref{sec:cost}.

\paragraph{Relation to prior work and contributions.} Related ideas exist. Weight-shared CNN accelerators bin inputs by weight index and multiply the bins by shared weights~\cite{garland2018low,garland2016low}; UCNN factorizes dot products over repeated weights~\cite{hegde2018ucnn}; lookup-table kernels such as LUT-GEMM and T-MAC replace multiplications in low-bit LLM inference~\cite{park2022lutgemm,wei2024tmac}; and ternary models such as BitNet b1.58 reduce multiplication to addition by construction~\cite{ma2024era}. We do not claim the identity is new. Our contributions are (i) extending it to floating-point operands and group-scaled decoders, stated for matrix and tensor products, and, as we read~\cite{hegde2018ucnn}, without the weight sorting and index metadata that UCNN uses to exploit repeated weights, and beyond the fixed-point CNN setting of~\cite{garland2018low,garland2016low} (\cref{sec:method,sec:cost,sec:example,sec:matrix}); (ii) integrating a scatter-accumulate processing element (SA-PE) as a drop-in node of an otherwise unchanged weight-stationary systolic array, with its load/stream/reduce/drain schedule, time-shared reduction engine, and closed-form sizing and local-memory models (\cref{sec:hw,sec:mem}); and (iii) validation of exactness, floating-point rounding error and operation counts, plus RTL sizing against a matched MAC array (\cref{sec:rtl,sec:eval}).

\section{Background and Notation}
\label{sec:background}
The dot product of two vectors $\mathbf a$ and $\mathbf b$ of length $n$ is
\begin{equation}
\mathbf a\cdot\mathbf b=\sum_{i=1}^{n}a_i b_i = a_1b_1+a_2b_2+\cdots+a_nb_n .
\label{eq:dot}
\end{equation}
When the elements of $\mathbf a$ have a low resolution of $k$ bits, they take only $2^k$ distinct values, which we write as integer \emph{weight codes}
$a_i\in\{0,1,\dots,2^k-1\}$.
Many AI algorithms exploit quantized weights by scaling them after they are loaded, so that each code is mapped to a more precise value; the mapping may be linear (scale and offset), a nonlinear function, or an arbitrary table. We represent it by a \emph{decoding vector} $\mathbf c=(c_0,\dots,c_{2^k-1})$ of precise coefficients: a code $v$ is replaced by $c_v$. The mapping may be computed as $c_v=f(v)$ for a linear or nonlinear decoding function $f$, or stored as a lookup table addressed by the code. Any such mapping is compatible with the method below.

The conventional computation begins with a decoding step that turns $\mathbf a$ into a vector $\mathbf d$ of precise weights, $d_i=c_{a_i}$, and then evaluates
\begin{equation}
\mathbf d\cdot\mathbf b=\sum_{i=1}^{n} d_i b_i ,
\label{eq:conv}
\end{equation}
requiring $n$ decode (lookup) operations, $n$ multiplications, $n$ additions, and storage for $\mathbf d$ or a pipeline that produces it on the fly. We call $\mathbf b$ the \emph{activations}, whatever its role in the network.

\section{The Add-Before-Multiply Method}
\label{sec:method}
\subsection{Refactoring}
All the elements of $\mathbf a$ that share a code are multiplied by the same coefficient, so the corresponding elements of $\mathbf b$ can be added together first and multiplied once. Define the \emph{accumulated group values} (bin sums)
\begin{equation}
q_v=\sum_{i:\,a_i=v} b_i,\qquad v=0,1,\dots,2^k-1,
\label{eq:bins}
\end{equation}
with the empty sum equal to zero.
\begin{proposition}
For all $\mathbf a,\mathbf b,\mathbf c$ as above, $\displaystyle \mathbf d\cdot\mathbf b=\sum_{v=0}^{2^k-1}c_v\,q_v$.
\end{proposition}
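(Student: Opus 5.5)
The plan is to partition the index set $\{1,\dots,n\}$ by weight code and regroup the conventional sum in \eqref{eq:conv} along that partition. For each $v\in\{0,1,\dots,2^k-1\}$ I set $I_v=\{i: a_i=v\}$. Since every code $a_i$ takes exactly one value in $\{0,\dots,2^k-1\}$, the sets $I_v$ are pairwise disjoint and their union is $\{1,\dots,n\}$. Some $I_v$ may be empty, and the empty-sum convention in \eqref{eq:bins} covers those cases.

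\textbf{Step 1: regroup.} Because the index sets partition $\{1,\dots,n\}$, I rewrite the sum as
\begin{equation*}
\sum_{i=1}^{n} d_i b_i=\sum_{v=0}^{2^k-1}\sum_{i\in I_v} d_i b_i .
\end{equation*}
This only reorders and groups a finite sum, so it relies on commutativity and associativity of addition.

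\textbf{Step 2: substitute and factor.} For $i\in I_v$ we have $d_i=c_{a_i}=c_v$, so each inner sum becomes $\sum_{i\in I_v}c_v b_i$. By distributivity this equals $c_v\sum_{i\in I_v}b_i=c_v q_v$. Summing over $v$ gives the claim.

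\textbf{Main obstacle.} The only delicate point is the arithmetic setting, since the paper allows integer or floating-point operands. As a statement about exact values (real numbers, or integers as the paper emphasises), the identity holds in any commutative ring, because only commutativity, associativity and distributivity are used. I would therefore state the proposition over exact arithmetic. I would also note that in floating point the reordering in Step 1 and the factoring in Step 2 change the rounding, so there the equality holds only up to rounding error. That discrepancy is a numerical question, deferred to the accuracy experiments, rather than part of this proof.
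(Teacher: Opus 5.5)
Your proof is correct and is essentially the same argument the paper gives: partition the indices by code, substitute $d_i=c_v$ on each class, and factor out $c_v$ by the distributive law. Your remark that the equality is exact only in exact arithmetic, and holds up to rounding in floating point, matches how the paper handles the numerical question separately in its software-model evaluation.
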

\begin{proof}
Partition $\{1,\dots,n\}$ by the value of $a_i$. Then $\sum_i c_{a_i}b_i=\sum_{v}\sum_{i:a_i=v}c_v b_i=\sum_v c_v\sum_{i:a_i=v}b_i=\sum_v c_vq_v$ by the distributive law.
\end{proof}
Written out in full,
\begin{equation}
\mathbf d\cdot\mathbf b=c_0\!\!\sum_{i:\,a_i=0}\!\! b_i+c_1\!\!\sum_{i:\,a_i=1}\!\! b_i+\cdots+c_{2^k-1}\!\!\!\!\sum_{i:\,a_i=2^k-1}\!\!\!\! b_i .
\label{eq:new}
\end{equation}
The number of multiplications and of decoding operations drops from $n$ to at most $2^k$ (fewer if only some codes occur in the vector and empty bins are skipped). The number of additions stays at $n$, plus at most $2^k-1$ to combine the weighted bins. What is introduced is the need to \emph{classify} the elements of $\mathbf b$ by the corresponding code.

\subsection{Weight codes as addresses}
One way to classify is to expand each $k$-bit code into a $2^k$-bit one-hot mask that gates the activation into the matching accumulator. This is simple, but costs one mask-generation operation per element and a mask-directed selection network. It is unnecessary: the code can be used directly as a binary address, or as an offset from a base address, of the accumulator in which its activation is summed. For $k=2$ there are four codes and four bin sums, held in four registers at $\mathrm{base}+\{0,1,2,3\}$ and selected by $a_i$. The decoder can be a $k$-bit passthrough or a small adder; no comparator or classifier is involved. \Cref{alg:both} gives the conventional and new procedures, and \cref{fig:flow} shows the dataflow.

\begin{algorithm}[t]
\caption{Conventional vs.\ add-before-multiply dot product. Inputs: codes $a[1..n]$, data $b[1..n]$, decode table $c[0..2^k-1]$.}
\label{alg:both}
\begin{minipage}[t]{0.48\textwidth}
\textbf{Conventional}
\begin{algorithmic}[1]
\State $p\gets 0$
\For{$i=1$ \textbf{to} $n$} \Comment{parallelizable}
 \State $d\gets c[a[i]]$ \Comment{decode}
 \State $p\gets p + d\cdot b[i]$ \Comment{MAC}
\EndFor
\State \Return $p$
\end{algorithmic}
\end{minipage}\hfill
\begin{minipage}[t]{0.48\textwidth}
\textbf{Add-before-multiply}
\begin{algorithmic}[1]
\State $q[0..2^k-1]\gets 0$
\For{$i=1$ \textbf{to} $n$} \Comment{parallelizable}
 \State $q[a[i]]\gets q[a[i]]+b[i]$ \Comment{add at offset}
\EndFor
\State $p\gets 0$
\For{$v=0$ \textbf{to} $2^k-1$}
 \State $p\gets p + c[v]\cdot q[v]$
\EndFor
\State \Return $p$
\end{algorithmic}
\end{minipage}
\end{algorithm}

\begin{figure}[t]
\centering
\begin{tikzpicture}[scale=0.92,transform shape,
  blk/.style={draw,rounded corners=2pt,minimum height=7mm,align=center,font=\footnotesize,inner sep=3pt},
  arr/.style={-{Stealth[length=2mm]},thick},
  lab/.style={font=\scriptsize,midway,above,align=center}]
 \node[font=\small\bfseries,anchor=west] at (0,2.6) {(a) Conventional};
 \node[blk] (a1) at (0.9,1.6) {codes $a_i$\\$n\times k$ bit};
 \node[blk,right=15mm of a1] (dec) {decode\\$n$ lookups};
 \node[blk,right=15mm of dec] (mul) {multiply\\$n$ MULs};
 \node[blk,right=15mm of mul] (sum) {accumulate\\$n$ ADDs};
 \node[blk,below=6mm of mul] (b1) {data $b_i$\\$n\times 16$ bit};
 \draw[arr] (a1)--(dec); \draw[arr] (dec)--node[lab,yshift=-1pt]{$d_i$}(mul);
 \draw[arr] (mul)--(sum); \draw[arr] (b1)--(mul);
 \draw[arr] (sum)--++(1.0,0) node[right,font=\footnotesize]{$p$};
 \node[font=\small\bfseries,anchor=west] at (0,-1.35) {(b) Add-before-multiply};
 \node[blk] (a2) at (0.9,-2.4) {codes $a_i$\\as address};
 \node[blk,right=15mm of a2] (acc) {bin accumulate\\$n$ ADDs at $q[a_i]$};
 \node[blk,right=15mm of acc] (mul2) {decode $\times$ multiply\\$2^k$ MULs};
 \node[blk,right=15mm of mul2] (sum2) {combine\\$2^k$ ADDs};
 \node[blk,below=6mm of acc] (b2) {data $b_i$\\$n\times 16$ bit};
 \draw[arr] (a2)--(acc); \draw[arr] (b2)--(acc);
 \draw[arr] (acc)--node[lab]{$q_v$}(mul2);
 \draw[arr] (mul2)--(sum2);
 \draw[arr] (sum2)--++(1.0,0) node[right,font=\footnotesize]{$p$};
\end{tikzpicture}
\caption{Dataflow of the conventional and add-before-multiply dot products. In (b) the $n$-element decoded vector $\mathbf d$ and the $n$ products are never formed; only $2^k$ bin sums are held.}
\label{fig:flow}
\end{figure}

\subsection{Delayed decoding, and two corollaries}
Decoding is \emph{delayed}: the codes are used directly during the long accumulation loop, and the coefficients are fetched or computed only for the $2^k$ bins afterwards. The storage that would hold the high-precision $\mathbf d$ is saved; $\mathbf d$ is never needed. The coefficient may come from a lookup table, a linear decoding function, or a nonlinear decoding function, and the three give identical results for a given code set.

\paragraph{Group-wise scales.} Practical quantization schemes often give each group of $g$ consecutive weights its own decode vector (a block scale, and possibly an offset). The method applies per group, with $(n/g)\,2^k$ multiplications in total; the advantage then requires $2^k\ll g$ rather than $2^k\ll n$.

\paragraph{One bin is redundant.} Because $\sum_v q_v=T\equiv\sum_i b_i$ depends only on the activations, one bin (say $q_0$) can be recovered as $T-\sum_{v\ge1}q_v$. When one activation vector is multiplied by many weight rows, $T$ is computed once and shared. For $k=1$ this means only elements with $a_i=1$ need be accumulated.

\section{Cost Model}
\label{sec:cost}
Let $E_d$, $E_m$, $E_a$ denote the energy (or latency, or area-delay product) of one decode, one multiplication and one accumulation, and $E_o$ the cost of forming the $k$-bit address offset for a bin update (about a $k$-bit integer addition, often absorbed in the addressing hardware). \Cref{tab:ops} lists operation counts per dot product, including the one-hot variant.

\begin{table}[t]
\centering
\caption{Operation counts for one length-$n$ dot product with $k$-bit weight codes. Counts are upper bounds if empty bins are skipped.}
\label{tab:ops}
\begin{tabular}{lccc}
\toprule
 & Conventional & One-hot masked & Direct address offset\\
\midrule
Decode / mask generation & $n$ decodes & $n$ masks & $0$ (offset only)\\
Coefficient decodes & $n$ & $\le 2^k$ & $\le 2^k$\\
Multiplications & $n$ & $\le 2^k$ & $\le 2^k$\\
Accumulator updates & $n$ & $n$ + $\le 2^k$ & $n$ + $\le 2^k$\\
Address-offset ops ($k$-bit) & 0 & 0 & $n$\\
Decoded-weight storage & $n$ values & $\le2^k$ coefficients & $\le2^k$ coefficients\\
Accumulation locations & 1 & $\le 2^k$ & $\le 2^k$\\
\bottomrule
\end{tabular}
\end{table}

Taking the bin additions to cost the same as a conventional accumulation, the energy ratio is
\begin{equation}
G(n,k)=\frac{n\,(E_d+E_m+E_a)}{n\,(E_a+E_o)+2^k\,(E_d+E_m+E_a)}\;\xrightarrow{\;n\to\infty\;}\;\frac{E_d+E_m+E_a}{E_a+E_o}.
\label{eq:gain}
\end{equation}
With $\rho=(E_d+E_m)/E_a$ and $E_o$ neglected, $G=(1+\rho)/\big(1+2^k(1+\rho)/n\big)$. The multiplier count falls by exactly $n/2^k$, but the total arithmetic gain saturates at $1+\rho$. For floating-point operands a multiplier typically costs several times an adder~\cite{horowitz2014computing}, so $\rho$ between 2 and 8 is a reasonable range for exploration (\cref{fig:gain}); for example $G\approx3.0$ ($\rho=2$) to $8.7$ ($\rho=8$) at $n=1024$, $k=2$. The headline ``more than $100\times$'' therefore refers to the \emph{number of multiplications}; the energy of the arithmetic falls by a factor approaching $1+\rho$, with larger gains where multipliers or lookups are comparatively expensive (wide floating-point formats, nonlinear decoders) or where the removed decoded-weight storage and traffic are significant. The saving also appears as area and switching activity: the per-cycle datapath needs an adder and a small accumulator file rather than a multiplier.

\begin{figure}[t]
\centering
\begin{tikzpicture}
\begin{axis}[width=0.78\textwidth,height=5.6cm,xmode=log,log basis x=2,
 xlabel={vector length $n$},ylabel={arithmetic gain $G$},xmin=16,xmax=16384,ymin=0,ymax=9.5,
 grid=both,grid style={gray!25},legend style={font=\scriptsize,at={(1.02,0.5)},anchor=west},
 tick label style={font=\footnotesize},label style={font=\small}]
\addplot[blue!70!black,thick,domain=16:16384,samples=60] {(1+2)/(1+4*(1+2)/x)};
\addplot[blue!70!black,thick,dashed,domain=16:16384,samples=60] {(1+2)/(1+16*(1+2)/x)};
\addplot[red!70!black,thick,domain=16:16384,samples=60] {(1+8)/(1+4*(1+8)/x)};
\addplot[red!70!black,thick,dashed,domain=16:16384,samples=60] {(1+8)/(1+16*(1+8)/x)};
\addlegendentry{$\rho{=}2,\ k{=}2$}\addlegendentry{$\rho{=}2,\ k{=}4$}\addlegendentry{$\rho{=}8,\ k{=}2$}\addlegendentry{$\rho{=}8,\ k{=}4$}
\end{axis}
\end{tikzpicture}
\caption{Arithmetic energy gain $G(n,k)$ from \cref{eq:gain} with $E_o=0$. The curves saturate at $1+\rho$ once $n\gg2^k(1+\rho)$.}
\label{fig:gain}
\end{figure}

\section{Worked Example}
\label{sec:example}
Take $n=10$, $k=2$, activations $\mathbf b=(.12,.34,.45,.56,.78,.90,.01,.23,.45,.67)$, codes $\mathbf a=(2,1,3,1,0,2,3,2,1,0)$ and a nonlinear decode vector $\mathbf c=(.06,.11,.37,.79)$. \emph{Conventionally}, the codes are decoded to $\mathbf d=(.37,.11,.79,.11,.06,.37,.79,.37,.11,.06)$, ten products are formed and summed: ten lookups, ten multiplications, ten additions, ten stored FP16 values, result $1.0614$.

\emph{With add-before-multiply}, \cref{tab:trace} shows the streaming phase: in each cycle one activation is added to the bin addressed by its code. The bins end at $q=(1.45,\,1.35,\,1.25,\,0.46)$, and the reduction gives
\[
1.45(.06)+1.35(.11)+1.25(.37)+0.46(.79)=.0870+.1485+.4625+.3634=1.0614,
\]
using ten bin updates, four decodes, four multiplications and four further additions, and holding only four values. The two results agree.

\begin{table}[t]
\centering
\caption{Stream-phase scatter trace and reduce-phase products for the worked example ($n=10$, $k=2$).}
\label{tab:trace}
\begin{tabular}{@{}lcccccccccc@{}}
\toprule
cycle $i$ & 0&1&2&3&4&5&6&7&8&9\\
\midrule
$b_i$ & .12&.34&.45&.56&.78&.90&.01&.23&.45&.67\\
code $a_i$ & 2&1&3&1&0&2&3&2&1&0\\
update & $q_2$ & $q_1$ & $q_3$ & $q_1$ & $q_0$ & $q_2$ & $q_3$ & $q_2$ & $q_1$ & $q_0$\\
\bottomrule
\end{tabular}

\vspace{4pt}
\begin{tabular}{@{}lcccc@{}}
\toprule
bin $v$ & 0 & 1 & 2 & 3\\
\midrule
accumulated $q_v$ & 1.45 & 1.35 & 1.25 & 0.46\\
coefficient $c_v$ & .06 & .11 & .37 & .79\\
product $c_vq_v$ & .0870 & .1485 & .4625 & .3634\\
\bottomrule
\end{tabular}
\end{table}

\section{Matrix and Tensor Products}
\label{sec:matrix}
Any matrix multiplication decomposes into dot products, so the method applies element by element. Let $A\in\mathbb R^{B\times K}$ be the activation matrix and $Q\in\{0,\dots,2^k-1\}^{K\times D}$ the weight-code matrix with decode vector $\mathbf c$. For each output element $y_{b,d}$ and each code $v$ there is an accumulator
\begin{equation}
Z_{b,d,v}=\sum_{\kappa:\,Q_{\kappa,d}=v}A_{b,\kappa},\qquad y_{b,d}=\sum_{v=0}^{2^k-1}c_v\,Z_{b,d,v},
\label{eq:matmul}
\end{equation}
i.e.\ $A_{b,\kappa}$ is added into $Z_{b,d,v}$ whenever $Q_{\kappa,d}=v$, and the decode and multiply happen once per output element, after the $K$-element accumulation. The result $Y=AQ'$ equals the product with the decoded matrix $Q'_{\kappa,d}=c_{Q_{\kappa,d}}$, but $Q'$ is never formed. The same holds for tensor contractions: the selected numerical values and codes participating in one output element are accumulated by code. A matrix--vector product with $D\times K$ weight codes needs $DK$ accumulations and $D\,2^k$ multiplications. Per output element the bin storage is $2^k$ accumulators, so the aggregate storage scales with the number of output elements processed concurrently---the design trade-off addressed by the weight-stationary schedule below.

\section{Hardware Realization}
\label{sec:hw}
\subsection{The scatter-accumulate processing element}
A scatter-accumulate processing element (SA-PE, \cref{fig:sape}) replaces the per-cycle multiplication of a conventional MAC element by a per-cycle address-selected accumulation, and defers all multiplication until accumulation is complete. It comprises: a \emph{weight-code storage structure} holding the code set for the current weight-stationary period, optionally double-buffered so the next period's codes load while the current one runs; an \emph{address decoder} producing a selection signal from the selected code (a $k$-to-$2^k$ one-hot decode, or base-plus-offset address generation, in the extreme a $k$-bit passthrough); a \emph{write-select circuit} (decoded per-bin write enables, a demultiplexer, or a decoded register-file or SRAM write address) that steers the incoming activation to the selected bin; an \emph{adder} that adds it to the value held in that bin; a \emph{bin accumulator bank} of $2^k$ locations; a $2^k$-entry \emph{decode LUT} (or decoding-function evaluator); and a \emph{reduction unit} that forms $\sum_j c_j\,\mathrm{Bin}[j]$ and presents it as the PE's partial sum; it fuses decode and multiply, since the decoded coefficient $c_j$ feeds the multiplier directly and the operation runs only $2^k$ times per tile. Signed codes are handled entirely in the decode LUT, so the address path operates only on unsigned $k$-bit codes. In the streaming datapath each cycle performs one address-selected add; the multiplier is not in the loop.

\begin{figure}[t]
\centering
\begin{tikzpicture}[
  blk/.style={draw,rounded corners=2pt,minimum height=7mm,font=\footnotesize,align=center,inner sep=3pt},
  arr/.style={-{Stealth[length=2mm]},thick}]
 \node[blk] (wc) at (0,1.5) {weight-code\\storage};
 \node[blk,right=7mm of wc] (ad) {address\\decoder};
 \node[blk,right=7mm of ad] (ws) {write-select};
 \node[blk,right=7mm of ws] (add) {adder};
 \node[blk,minimum height=14mm,right=7mm of add] (bank) {bin bank\\$\mathrm{Bin}[0..2^k{-}1]$};
 \node[blk,below=9mm of ws] (act) {activation in};
 \node[blk,below=9mm of bank] (lut) {decode LUT\\$c[0..2^k{-}1]$};
 \node[blk,right=9mm of bank] (red) {reduction\\unit};
 \draw[arr] (wc)--(ad); \draw[arr] (ad)--(ws); \draw[arr] (ws)--(add); \draw[arr] (add)--(bank);
 \draw[arr] (act)--(ws);
 \draw[arr] (bank)--(red);
 \draw[arr] (lut.east)-|(red.south);
 \draw[arr] (bank.north) -- ++(0,5mm) -| node[pos=0.25,above,font=\scriptsize]{read-modify-write} (add.north);
 \draw[arr] (red.east)--++(8mm,0) node[right,font=\footnotesize]{partial sum};
\end{tikzpicture}
\caption{Scatter-accumulate processing element. During streaming, the stored code selects a bin and the adder updates it; during the reduce phase the reduction unit multiplies each bin by its decoded coefficient and sums.}
\label{fig:sape}
\end{figure}

\subsection{Weight-stationary systolic array}
SA-PEs can replace the MAC unit at each node of a conventional weight-stationary systolic array (\cref{fig:array}); array-level control, tiling and interconnect of an existing design can be reused. Weight codes are loaded into the PEs once per weight tile and held stationary while activations are streamed. Each output element corresponds to a row of the array; the dot product is split into slices of $L$ elements processed by the $C$ PEs of that row ($n=CL$), each scatter-accumulating its slice of the activations into its own independently addressed bin bank. In each stream cycle a PE consumes one stationary code, read from its code store, and one activation; the gain needs $L\gg2^k$, since otherwise the bins collect too few terms to amortize the reduction. After a full pass of the activation slice, each PE holds $2^k$ partial bin sums and its reduction unit produces a partial dot product; a conventional row adder chain sums the partial results across the row to produce the output. With $H$ rows the array computes $H$ output elements per pass. A problem larger than the physical array is partitioned into weight tiles in the usual way, with row-partial results accumulated across tiles in an output accumulator.

\begin{figure}[t]
\centering
\begin{tikzpicture}[
  pe/.style={draw,rounded corners=2pt,minimum width=11mm,minimum height=8mm,font=\scriptsize,align=center},
  arr/.style={-{Stealth[length=1.8mm]},thick}]
 \foreach \r in {0,1,2}{
  \foreach \c in {0,1,2}{
    \node[pe] (p\r\c) at (\c*1.9,-\r*1.3) {SA-PE};
  }
  \node[pe,minimum width=9mm] (r\r) at (6.7,-\r*1.3) {red.};
  \draw[arr] (p\r0)--(p\r1); \draw[arr] (p\r1)--(p\r2);
  \node[font=\scriptsize,left=1mm of p\r0] {$y_{\r}$};
 }
 \foreach \c in {0,1,2}{
  \draw[arr] (\c*1.9,0.95)--(p0\c.north);
  \draw[thick] (p0\c.south)--(p1\c.north); \draw[thick] (p1\c.south)--(p2\c.north);
  \node[font=\scriptsize,above] at (\c*1.9,0.95) {$x$ slice $\c$};
 }
 \draw[arr] (p02)--(r0); \draw[arr] (p12)--(r1); \draw[arr] (p22)--(r2);
 \node[font=\scriptsize,align=center] at (7.9,-1.3) {row adder\\chain $\to y_m$};
\end{tikzpicture}
\caption{Weight-stationary array of SA-PEs. Activation slices are broadcast down columns; the PEs of a row hold codes for the same output element and their partial sums are combined by a row adder chain. (Schematic; array size is arbitrary.)}
\label{fig:array}
\end{figure}

\subsection{Schedule}
A weight-stationary period has four phases (\cref{fig:phases}): \emph{load} (about one cycle: write the code set, initialize bins, load the decode LUT, reset counters), \emph{stream} ($L$ cycles, one scatter-accumulate per activation), \emph{reduce} ($2^k$ cycles of multiply-add in the reduction unit) and \emph{drain} ($C$ cycles to flush the $C$ partial sums of a row through the $C$-stage adder chain). The next tile's load can overlap the current tile's work through double buffering. Because $2^k\ll L$, load, reduce and drain are short compared with the stream phase, so throughput is set by one addition per activation per cycle; the period grows only by the $2^k$ reduce cycles relative to a conventional MAC pipeline (6\% for $k=2$, $L=64$, and 25\% for $k=4$), unless the reduce phase is overlapped with the next tile's stream, which requires double-buffered bins. The reduction multiplier runs only $2^k$ times per $L$ cycles, so it can be small, clocked slower than the stream path, or time-shared among all PEs of a row or column. A small controller finite-state machine with phase counters (a $\lceil\log_2 L\rceil$-bit stream counter and a $k$-bit reduce counter) sequences the phases through load, stream, reduce and drain states, asserting enables such as load-weights, init-bins, stream-enable and reduce-enable.

\begin{figure}[t]
\centering
\begin{tikzpicture}[
  ph/.style={draw,minimum height=6mm,font=\scriptsize,align=center,inner sep=1pt}]
 \node[ph,minimum width=6mm,fill=gray!15] (l) at (0.3,0) {L};
 \node[ph,minimum width=60mm,fill=blue!12,anchor=west] (s) at (0.6,0) {stream: $L$ cycles (1 add per cycle)};
 \node[ph,minimum width=9mm,fill=red!12,anchor=west] (r) at (6.6,0) {reduce};
 \node[ph,minimum width=11mm,fill=gray!15,anchor=west] (d) at (7.5,0) {drain};
 \node[ph,minimum width=6mm,fill=gray!15] (l2) at (6.2,-0.8) {L};
 \node[ph,minimum width=60mm,fill=blue!12,anchor=west] (s2) at (8.6,-0.8) {stream\ldots};
 \node[font=\scriptsize,anchor=east] at (-0.1,0) {tile $t$};
 \node[font=\scriptsize,anchor=east] at (-0.1,-0.8) {tile $t{+}1$};
 \node[font=\scriptsize,below] at (7.05,-0.15) {$2^k$};
\end{tikzpicture}
\caption{Phase schedule (not to scale). The load of the next tile overlaps the reduce and drain of the current one; the reduce phase lasts $2^k\ll L$ cycles.}
\label{fig:phases}
\end{figure}

\subsection{Accumulator width and multi-precision design}
\emph{Bin width.} In the worst case all activations of a slice map to one code and sum into a single bin, so a fixed-point bin needs $w_{act}+\lceil\log_2 n_{\max}\rceil$ bits, where $n_{\max}$ is the number of activations accumulated into one bin (the slice length $L$ in an array); for 16-bit activations and $n_{\max}=1024$ that is ten guard bits. Alternatively the bins can be wider floating-point accumulators (e.g.\ FP32 for FP16 inputs), as is common practice in accelerators, which avoids bit growth.
\emph{Multi-precision.} A single SA-PE can support $k=1$ to $4$ (or more) by parameterizing the bin count and the address-decoder width. A design provisions the maximum bin count (16 for $k=4$); at lower $k$ the unused bins can be power- or clock-gated. The bin bank, whether registers or SRAM, is the dominant area of the PE, while the multiplier cost is amortized over the stream length regardless of $k$, so the design trade-off is set mainly by bin-bank size rather than arithmetic width.
\emph{Resource comparison.} \Cref{tab:pe} summarizes the per-PE comparison, and \cref{tab:red} gives the multiplication reduction $n/2^k$.

\begin{table}[t]
\centering
\caption{Per-PE resources for an $n$-length period: conventional MAC systolic PE versus SA-PE.}
\label{tab:pe}
\begin{tabular}{lll}
\toprule
 & Conventional MAC PE & SA-PE\\
\midrule
Multiplications per dot product & $n$ & $\le 2^k$, independent of $n$\\
Additions per dot product & $n$ accumulator updates & $n$ scatter updates + $\le 2^k$ reduce updates\\
Per-cycle streaming operation & 1 multiply + 1 accumulate & 1 address-selected add\\
Extra addressing logic & none & $k$-bit address decode / base+offset\\
Storage per PE & 1 accumulator & $2^k$ bins (width = data + $\lceil\log_2 n\rceil$)\\
Weight storage per PE & decoded weight, or $k$-bit + decode & $k$-bit codes only; decode LUT shared\\
\bottomrule
\end{tabular}
\end{table}

\begin{table}[t]
\centering
\caption{Multiplication reduction factor $n/2^k$ per PE per tile.}
\label{tab:red}
\begin{tabular}{rrrrr}
\toprule
$k$ & mults ($2^k$) & $n=256$ & $n=1024$ & $n=4096$\\
\midrule
1 & 2  & 128$\times$ & 512$\times$ & 2048$\times$\\
2 & 4  & 64$\times$ & 256$\times$ & 1024$\times$\\
3 & 8  & 32$\times$ & 128$\times$ & 512$\times$\\
4 & 16 & 16$\times$ & 64$\times$ & 256$\times$\\
\bottomrule
\end{tabular}
\end{table}

\subsection{Software and vector processors}
The same refactoring runs in software or on SIMD hardware: the loop of \cref{alg:both} is efficient on any core with cheap indexed read-modify-write to a small cache-resident table, and the address is formed as a base register plus the code offset. On a vector ISA with indexed stores (e.g.\ RISC-V \texttt{vsoxei}$*$), an implementation scatters each activation into a zero-filled $2^k\times \mathrm{VL}$ block at the row given by its code and then reduces the block. Because an indexed store overwrites rather than accumulates, the reduction then also adds the zeros ($2^k\,\mathrm{VL}$ lane-additions rather than $\mathrm{VL}$), so on such hardware the gain is confined to decode and multiplication; a scatter-\emph{accumulate} primitive or an accumulator file addressed by the code, as in the SA-PE, recovers the full operation count of \cref{tab:ops}.

\section{Local Memory Reduction}
\label{sec:mem}
Delayed decoding also shrinks the local weight memory. Consider one PE holding a weight-stationary tile of $L$ weights. Let $w_d$ be the decoded-weight width, $w_b$ the bin width, and $P$ the number of PEs that share one decode table; activation buffers are identical in all designs and are excluded. Three organizations are compared, with sizes in bits:
\begin{align}
M_A&=L\,w_d &&\text{(decode, then store the decoded weights)}\label{eq:MA}\\
M_B&=Lk+2^kw_d &&\text{(store codes, decode per element)}\label{eq:MB}\\
M_N&=Lk+2^k\!\left(w_d/P+w_b\right) &&\text{(scatter-accumulate)}\label{eq:MN}
\end{align}
Relative to a decoded tile the reduction is
\begin{equation}
R=\frac{M_A}{M_N}=\frac{L\,w_d}{Lk+2^k(w_d/P+w_b)}\;\xrightarrow{\;L\to\infty\;}\;\frac{w_d}{k},
\label{eq:mem}
\end{equation}
and $M_N<M_A$ once $L>2^k(w_d/P+w_b)/(w_d-k)$. \Cref{tab:mem} gives values for $L=1024$, $w_d=16$, FP32 bins ($w_b=32$) and $P=1$, and \cref{fig:mem} shows $R(L)$. For a row of 32 PEs with 1024-weight slices, 64\,KiB of decoded weights becomes 8.8\,KiB for $k=2$ or 19\,KiB for $k=4$. Using fixed-point bins of $w_{act}+\lceil\log_2L\rceil=26$ bits instead of FP32, or sharing the decode table across 32 PEs, changes $R$ by at most about 5\%.

\begin{table}[t]
\centering
\caption{Local weight memory per PE ($L=1024$, $w_d=16$, $w_b=32$, $P=1$).}
\label{tab:mem}
\begin{tabular}{rrrrrr}
\toprule
$k$ & $M_A$ (B) & $M_B$ (B) & $M_N$ (B) & $R=M_A/M_N$ & $M_N/M_B$\\
\midrule
1 & 2048 & 132 & 140 & 14.6 & 1.06\\
2 & 2048 & 264 & 280 & 7.3 & 1.06\\
4 & 2048 & 544 & 608 & 3.4 & 1.12\\
\bottomrule
\end{tabular}
\end{table}

\begin{figure}[t]
\centering
\begin{tikzpicture}
\begin{axis}[width=0.78\textwidth,height=5.2cm,xmode=log,log basis x=2,xlabel={weights per PE tile $L$},ylabel={memory reduction $R$},xmin=16,xmax=16384,ymin=0,ymax=17.5,grid=both,grid style={gray!25},legend style={font=\scriptsize,at={(1.02,0.5)},anchor=west},tick label style={font=\footnotesize},label style={font=\small}]
\addplot[blue!70!black,thick,domain=16:16384,samples=60] {x*16/(x*1+2*(16+32))};
\addplot[orange!85!black,thick,domain=16:16384,samples=60] {x*16/(x*2+4*(16+32))};
\addplot[green!45!black,thick,domain=16:16384,samples=60] {x*16/(x*4+16*(16+32))};
\addplot[black,thin] coordinates {(16,1) (16384,1)};
\addlegendentry{$k=1$}\addlegendentry{$k=2$}\addlegendentry{$k=4$}
\end{axis}
\end{tikzpicture}
\caption{Memory reduction $R(L)$ from \cref{eq:mem} for $w_d=16$, $w_b=32$, $P=1$. The curves approach $w_d/k=16$, $8$, $4$; the thin line marks $R=1$.}
\label{fig:mem}
\end{figure}

The benefit depends on the baseline. It applies when the conventional design holds decoded weights locally ($M_A$). If the conventional design instead keeps $k$-bit codes and decodes per element ($M_B$), scatter-accumulate needs 6--12\% \emph{more} memory for the bins, and its advantage is the removed per-element decode and multiply rather than memory. The saving also requires $L\gg2^k$; for $k=4$ it vanishes at $L=64$.

\section{RTL Implementation and Sizing}
\label{sec:rtl}
\paragraph{Design.} We implemented the SA-PE and an $H\times C$ array in synthesizable Verilog, parameterized by $k$, slice length $L$, activation width $w_{act}$, rows $H$ and columns $C$. Each PE holds an $L\times k$-bit weight-code memory, $2^k$ signed bins and a multiply-accumulator used only in the reduce phase (bin $\times$ decode value, accumulated into a partial sum). A bin must hold the sum of $L$ activations and so needs at least $w_{act}+\lceil\log_2L\rceil$ bits (22 for $w_{act}=16$, $L=64$); the RTL adds one guard bit, using $\lceil\log_2(L+1)\rceil$ (23 bits), which gives a 42-bit partial sum. One $2^k$-entry decode LUT is shared by the whole array and broadcast during the reduce phase. A six-state controller (idle, flush, stream, reduce, drain, done) with stream, reduce and drain counters sequences each tile, and a registered adder chain sums the $C$ PE partial sums of each row over $C$ drain cycles.

\paragraph{Verification and baseline.} A self-checking testbench loads random codes, a random decode table and random activations into a $4\times3$ array and compares each output with an integer decode-then-MAC reference (8 random tiles, no mismatches). As baseline we built a matched MAC array with identical interfaces, code memories, shared LUT and controller (minus the reduce state); its PE decodes and multiplies every element, and it passes the same test. Sharing the structure keeps the comparison confined to the PE datapath.

\paragraph{Per-PE sizing.} Both arrays were synthesized with Yosys 0.33 to Xilinx 7-series LUT6 and DSP48E1 primitives ($k=2$, $L=64$, $w_{act}=16$). \Cref{tab:rtl} gives per-PE results; the code memory (one RAM64M) is identical in the two designs and is omitted. The last two rows are projections obtained by removing the multiplier from the PE and adding a separately synthesized reduction engine amortized over $\sigma=16$ PEs.

\begin{table}[t]
\centering
\caption{Per-PE FPGA resources ($k=2$, $L=64$, $w_{act}=16$, Yosys 0.33). $\dagger$: projection with the reduction multiplier shared by $\sigma=16$ PEs.}
\label{tab:rtl}
\begin{tabular}{lrrr}
\toprule
PE & LUT & FF & DSP48\\
\midrule
MAC, multiplier in DSP & 55 & 39 & 1\\
MAC, multiplier in fabric & 819 & 39 & 0\\
SA-PE (RTL), multiplier in DSP & 161 & 134 & 1\\
SA-PE (RTL), multiplier in fabric & 1193 & 134 & 0\\
SA-PE shared$^\dagger$, DSP engine & 122 & 95 & 0.06\\
SA-PE shared$^\dagger$, fabric engine & 179 & 95 & 0\\
\bottomrule
\end{tabular}
\end{table}

\paragraph{Array sizing.} A $4\times3$ array ($L=64$, 12 PEs, DSP-mapped) needs 2368 LUTs, 1807 FFs and 12 DSP48 for the SA array, against 1335 LUTs, 719 FFs and 12 DSP48 for the MAC array.

\paragraph{Interpretation.} The unshared SA-PE is \emph{larger} than a MAC PE (161 versus 55 LUTs and 134 versus 39 FFs with a DSP multiplier) because of the bin bank (92 flip-flops here), the partial-sum register and the bin-select multiplexer, and because it still contains one multiplier. The saving appears only when that multiplier is removed from the PE. A PE without the multiplier needs 119 LUTs and 92 FFs, and a stand-alone reduction engine needs 42 LUTs, 42 FFs and one DSP48 (960 LUTs if the multiplier is built in fabric). Shared by $\sigma=16$ PEs the engine adds about 3 LUTs per PE, so DSP use falls from one per PE to $1/16$, and with fabric multipliers the per-PE cost falls from 819 LUTs (MAC) to 179, a factor of $4.6$.

\paragraph{Sharing limit.} Each PE needs only $2^k$ multiplications per $L$ cycles, so one multiplier can serve at most $\sigma=L/2^k$ PEs: 16 for $k=2$, $L=64$, and 4 for $k=4$, $L=64$. The implemented RTL keeps one multiplier per PE. Sharing additionally requires the reduction of tile $t$ to overlap the stream of tile $t{+}1$, which needs double-buffered bins (about 92 more flip-flops per PE); otherwise the reduce phase grows to $\sigma2^k$ cycles.

\paragraph{Dependence on $k$.} The bin bank grows with $2^k$. At $k=4$ the SA-PE needs 359 LUTs and 412 FFs against 119 LUTs and 39 FFs for the MAC PE, and the serial reduce phase costs $2^k$ cycles per tile: 6\% of a 64-cycle stream at $k=2$ and 25\% at $k=4$. The code memory per PE is $Lk=128$ bits for $k=2$, $L=64$, plus the bins, against 1024 bits for a decoded 16-bit tile.

\paragraph{Caveats.} These are figures from an open-source FPGA flow. They indicate relative sizing, not ASIC area, power or Vivado results; the sharing figures are projections from synthesized components rather than synthesis of a complete shared design; and the comparison is for the specific datapath widths and parameters listed.

\section{Evaluation by Software Models}
\label{sec:eval}
\paragraph{Numerical behaviour.} For integer activations and integer decoded weights the refactoring is exact, since integer addition is associative and the distributive law holds; we verified this on random 8-bit data ($n=1000$, $k=2$). For floating point, reordering the additions changes rounding but not its order of magnitude. We measured this with a model in which $c_v$ are sorted normal variates in FP16, activations are normal in FP16 and codes are uniform; 40 random trials per row, FP64 reference. \Cref{tab:err} reports the mean error relative to $\sum_i|d_ib_i|$ for (i) the conventional method with FP32 multiply-accumulate, (ii) the conventional method with a fully FP16 datapath, (iii) the new method with FP16 bins and (iv) the new method with FP32 bins.

\begin{table}[t]
\centering
\caption{Mean relative error (normalized by $\sum_i|d_ib_i|$) against an FP64 reference.}
\label{tab:err}
\begin{tabular}{rrcccc}
\toprule
$k$ & $n$ & Conv.\ FP32 acc. & Conv.\ FP16 & New, FP16 bins & New, FP32 bins\\
\midrule
1 & 256  & $1.9\!\times\!10^{-8}$ & $1.6\!\times\!10^{-4}$ & $1.1\!\times\!10^{-4}$ & $2.6\!\times\!10^{-9}$\\
1 & 4096 & $1.7\!\times\!10^{-8}$ & $1.3\!\times\!10^{-4}$ & $1.1\!\times\!10^{-4}$ & $1.1\!\times\!10^{-9}$\\
2 & 256  & $2.2\!\times\!10^{-8}$ & $2.1\!\times\!10^{-4}$ & $8.5\!\times\!10^{-5}$ & $3.7\!\times\!10^{-9}$\\
2 & 4096 & $1.8\!\times\!10^{-8}$ & $1.5\!\times\!10^{-4}$ & $7.3\!\times\!10^{-5}$ & $1.2\!\times\!10^{-9}$\\
4 & 256  & $1.7\!\times\!10^{-8}$ & $1.6\!\times\!10^{-4}$ & $7.9\!\times\!10^{-5}$ & $3.4\!\times\!10^{-9}$\\
4 & 4096 & $1.9\!\times\!10^{-8}$ & $1.8\!\times\!10^{-4}$ & $5.8\!\times\!10^{-5}$ & $1.2\!\times\!10^{-9}$\\
\bottomrule
\end{tabular}
\end{table}

At equal accumulator width the new method is no less accurate than the conventional one, and more accurate in the FP16 setting, since the per-element product rounding is removed. Because the bins perform only additions of activations they can be widened cheaply; with FP32 bins the error is roughly an order of magnitude below the FP32 conventional pipeline, since only about $n+2^k$ rather than $3n$ roundings occur.

\paragraph{Array-level functional check.} We also modelled the tiled weight-stationary organization of \cref{sec:hw}: 30 random GEMV configurations ($k\in[1,4]$, up to 32 output rows, $K$ up to 700, random slice lengths $L$ per PE) in FP64 gave a maximum absolute difference of $4.3\times10^{-14}$ from the reference $(c[Q])\,x$, integer-exact results in an INT64 test, and a multiplication count of exactly $H\lceil K/L\rceil 2^k$, confirming the reduction factors of \cref{tab:red} (e.g.\ a $4096\times4096$ GEMV with $k=4$ and 1024-element slices uses $64\times$ fewer multiplications).

These are software emulations on synthetic data, not measurements on silicon or trained networks; end-to-end model accuracy and energy should be confirmed on real hardware and models.

\section{Discussion}
\label{sec:discussion}
\paragraph{Relation to prior work.} Exploiting repeated weights to factor out multiplications is an established idea, and this paper should be read in that context. Weight sharing with codebooks, which creates the few-distinct-values structure used here, was popularized by Deep Compression~\cite{han2016deep}. Accumulate-then-multiply units for weight-shared CNN accelerators, which bin inputs by weight index and multiply the bins by the shared weights, were proposed by Garland and Gregg~\cite{garland2018low,garland2016low}, and UCNN~\cite{hegde2018ucnn} exploits weight repetition by factorizing dot products and sharing partial sums. A separate, active line of work replaces multiplications in low-bit LLM inference by lookup tables of precomputed activation combinations (LUT-GEMM~\cite{park2022lutgemm}, T-MAC~\cite{wei2024tmac}), and ternary-weight models such as BitNet b1.58~\cite{ma2024era} reduce multiplications to additions by construction. Relative to these, the formulation here is deliberately general: it handles integer or floating-point operands and linear or nonlinear decoders, uses the weight code itself as the accumulator address so that neither a one-hot mask nor a decoded weight vector is formed, delays decoding until after accumulation, and is packaged as a drop-in PE for weight-stationary arrays with a streaming schedule and multi-precision provisioning. We make no claim that the algebraic identity is new; a detailed comparison with these earlier proposals would be needed to establish precisely where the benefit lies for a given target.

\paragraph{When it helps, and when it does not.} The benefit requires $2^k\ll n$ (or $\ll g$ for group-wise scales), multiplication and decode to be a meaningful part of the cost, and hardware that performs address-selected accumulation cheaply. In memory-bandwidth-bound inference, which motivates low-bit weights, reduced arithmetic does not by itself raise throughput, but the saved multiplier area and switching power can be spent on wider on-chip weight buffers or more parallel PEs within the same budget. Costs include the bin bank (the dominant PE area, $2^k$ wide accumulators per PE), a read-after-write hazard when consecutive activations share a code (requiring forwarding or interleaved bin sets), and, for large $k$, diminishing returns as $2^k$ approaches $n$. The RTL results of \cref{sec:rtl} show the first cost concretely: an unshared SA-PE is larger than a MAC PE, so the area benefit depends on sharing the reduction multiplier and on a bin bank that is cheap relative to the multiplier it replaces. The local-memory saving of \cref{sec:mem} likewise applies only against a baseline that stores decoded weights.

\paragraph{Alternatives.} For linearly coded weights one can split $\mathbf a$ into bit planes and compute $k$ masked sums scaled by powers of two; this needs no multiplications but $k$ conditional passes over $\mathbf b$, versus one pass here, and does not handle nonlinear decoders without an extra step.

\section{Conclusion}
Refactoring a low-bit dot product so that activations are added into code-addressed accumulators before any multiplication reduces the multiplier and decode count from $n$ to at most $2^k$, eliminates the decoded weight vector, and keeps the addition count at $n+2^k$ rather than $n$. A scatter-accumulate PE realizes this in a weight-stationary systolic array with a $2^k$-cycle reduce phase per tile and a small time-shared multiplier, and an RTL implementation shows the sizing trade-offs against a matched MAC array. The arithmetic gain saturates at the cost ratio of a decode--multiply--accumulate to a plain accumulate, so the method is most attractive where multipliers and decoders are expensive and accumulators can be made wide cheaply. Next steps are ASIC synthesis of the SA-PE and array with a shared reduction engine, to replace the abstract energy model with measured area and power, and evaluation of end-to-end accuracy on quantized models.

\end{document}